\documentclass[10pt,letterpaper,twocolumn]{article}

\usepackage[OT1]{fontenc}
\usepackage{amsmath,amssymb,amsthm}
\usepackage[margin=0.72in,columnsep=0.28in]{geometry}
\usepackage{graphicx}
\usepackage[round]{natbib}
\usepackage{booktabs}
\usepackage{multirow}
\usepackage{array}
\usepackage{algorithm}
\usepackage{algorithmic}

\newtheorem{theorem}{Theorem}
\newtheorem{corollary}{Corollary}
\newtheorem{remark}{Remark}

\makeatletter
\renewcommand\section{\@startsection{section}{1}{\z@}%
  {-3.5ex \@plus -1ex \@minus -.2ex}{2.3ex \@plus.2ex}%
  {\raggedright\normalfont\Large\bfseries}}
\renewcommand\subsection{\@startsection{subsection}{2}{\z@}%
  {-3.25ex\@plus -1ex \@minus -.2ex}{1.5ex \@plus .2ex}%
  {\raggedright\normalfont\large\bfseries}}
\makeatother

\title{\vspace{-2.2em}\textbf{Reading Less While Writing:}\\[0.2em]
       \textbf{A Closed-Form Bandwidth Dial for Streaming\\
       Multimodal Decoders}\vspace{-0.2em}}

\author{
  Yasir Mehmood\\
  \small Independent Researcher\\
  \small Lahore, Pakistan\\
  \small\texttt{yasir.mehmood@gmail.com}
  \and
  Kashif Javed\\
  \small University of Engineering and Technology (UET)\\
  \small Lahore, Pakistan\\
  \small\texttt{kashif.javed@uet.edu.pk}
}
\date{}

\begin{document}

\twocolumn[
\begin{@twocolumnfalse}
\maketitle
\begin{abstract}
\noindent
When a model translates a source---be it video or audio---into text, it
conventionally looks at the whole input before writing a single word. In offline settings this is
merely more than the task requires; in live mode, it is impossible: a caption cannot wait for
a football match to end. The streaming methods used in practice bolt on a scheduler
that reveals the input by a fixed rule (for instance, the wait-$k$ family) and always
waits for the same number of input tokens before each word, regardless of the length or pace of the input. We take a simpler route. Unlike a fixed offset, our model, ZENDAYA
(Zero-leakage Non-Decreasing Aligned \& Yoked Architecture), dynamically shapes its reading to
each input by leveraging a single knob, $\gamma$, that slides the visible window smoothly with the
sentence and in proportion to the input's own predicted length. So, while the schedule remains
deterministic, with almost nothing learned from the input except for an auxiliary length estimate, it is not entirely blind to the input the way a
constant $k$ is. Thus, turn $\gamma$ one way and the model reads everything upfront, as usual. Then, turn it the other way and it keeps pace with a live feed. This knob also fixes, in closed form, the
average fraction of the source consumed per word written, and it carries a proof: in streaming operation, no output can ever depend on input that has not yet arrived. The result is counterintuitive: seeing less can produce better text, since a flood of input
dilutes attention at the very moment the model decides what to say first. Trained
from scratch on three public benchmarks spanning two modalities (Charades-STA and
ActivityNet Captions for video and LibriHeavy for audiobook speech) our compact
29M-parameter decoder matches or outperforms the fixed schedule while reading less of the source. Its sharpest gains fall exactly where live systems operate: in our lowest-latency settings, text generation begins after a single source token, in contrast to a larger fixed wait. Even there, ZENDAYA keeps pace with the stream while holding quality, delivering streaming METEOR improvements that remain statistically significant on all three datasets. On ActivityNet Captions, it
reaches this quality with a fraction of the parameters that comparable video
captioners use, signalling that a principled schedule, not sheer size, is doing the work.
\vspace{1.4em}
\end{abstract}
\end{@twocolumnfalse}
]

\section{Introduction}\label{sec:intro}

A person interpreting a speech does not wait for the speaker to finish. They begin almost at once, a few words behind, and let their output flow alongside the incoming words. Machines that turn one sequence into another, for instance speech into a transcript or video into a caption, usually do the opposite. A standard decoder consults the \textit{entire} input before committing to its first word. For a short clip stored on disk, this costs nothing. For a live broadcast it is impossible as the caption cannot wait for the match to end before announcing the goal.

This gap between how machines translate and how the world arrives has two faces, and both trace back to the same habit of looking at everything at once.

\textbf{1. Too much input can hurt.} It is tempting to assume that more context is always better. It is not. In long documents, language models quietly neglect the middle of what they are given \cite{liu2024lost}. In video, piling on frames can lower caption quality rather than raise it, since consecutive frames make visual tokens redundant, as recently addressed by \citet{lee2025mams}. A decoder that stares at the whole source is most exposed to this dilution at the very first word, when it has the least of its own output to anchor on. And once that word is out, a left-to-right model cannot take it back.

\textbf{2. Looking at everything forecloses streaming.} If the first word depends on the whole input, the model cannot start until the whole input exists. The usual fix is to attach a separate controller that doles out the input gradually. A wait-$k$ rule, for example, reads $k$ tokens, writes one, reads another, and so on. These schedulers work, but they are rigid in a telling way. The amount they wait is a fixed constant, the same for a two-second clip and a two-minute one, blind to how long the input is or how quickly its meaning unfolds. A single setting cannot suit every input, and, as we show, no single setting suits even a single dataset once the balance between input and output shifts.

\textbf{Our fix: One knob, tuned to the data.} That balance is measurable by how many input tokens there are per output token. In some settings words race ahead of the signal, as in live transcription, where speech is written down almost as fast as it is spoken, so the model must keep pace. In others a long input funds a short summary, as in captioning a lengthy video in one sentence. Here revealing the input too eagerly is just the old look-at-everything habit in disguise. No fixed amount of waiting can be right for both settings. ZENDAYA replaces the fixed rule with a single continuous control, $\gamma$, that decides how much of the source the decoder may look at \textit{as a function of how far it has gotten}, letting the visible window grow smoothly with the sentence instead of by a rigid step. All of this comes from one mechanism, with no separate controller to train and nothing specific to audio or video baked in.

What makes this more than a convenient dial is that the same number carries three meanings at once. As a \textit{latency setting} it says how far the model trails the live input. As an \textit{interpretable budget} it fixes in closed form the average fraction of source consumed per word, so turning it up provably reads less. And as a guarantee, it ensures that, when streaming, no output can depend on input that has not yet arrived. This last one is structural, holding for trained and untrained models alike and extending to unbounded, arbitrarily-arriving streams. There is also a bonus here. Because the model cannot peek ahead, the latency measured offline is provably the latency shown live.

We put the idea to work with a deliberately small model: a 29M-parameter decoder trained from scratch, with no pretraining and no fine-tuned encoders. Across two video datasets it beats its own full-context counterpart, and on audio it matches it, all while looking at less of the source. It also meets or exceeds the fixed-schedule alternative, with the sharpest gains at the lowest latencies where the fixed rule falls apart. The scale is worth pausing on. On ActivityNet Captions, the tri-modal captioner of \citet{iashin2020mdvc} reports BLEU-4 of 1.81 and METEOR of 10.09 on ground-truth proposals with 179 million parameters. Our decoder reaches 1.97 BLEU-4 and 12.25 METEOR offline with 29 million, excluding the frozen encoders it reads from --- the figures our three-seed replication supports (Appendix~\ref{app:seeds}); the single-run sweep of Table~\ref{tab:fixed} reports 2.10 and 12.64 at a smaller $\gamma$ that the seeds do not confirm. Their evaluation averages over tIoU thresholds and adds audio and ASR text, so we read this not as a state-of-the-art claim but as evidence that a principled schedule, not sheer size, does the work.

\textbf{Contributions.} This paper contributes: \textbf{(i)} a single-parameter, progress-indexed cross-attention schedule whose endpoints are physically meaningful (i.e., offline, real-time, and deliberately starved) and whose mean source exposure obeys the closed form $\bar{E}(\gamma) \approx 1/(1{+}\gamma)$, making $\gamma$ at once a latency dial and an exposure budget, with a practical rule for setting it from a dataset's input-to-output ratio; \textbf{(ii)} a structural dependency theorem showing that, for any schedule fixed in advance and non-decreasing, the emitted stream depends only on already-seen input (trained and untrained weights alike), with a corollary extending this to unbounded streams under arbitrary asynchronous arrival; \textbf{(iii)} ZENDAYA-inf, a windowed extension for continuous streams whose only change to the decoder's training objective is a loss mask; and \textbf{(iv)} evidence across two modalities and three datasets that a compact from-scratch decoder, seeing less of the source, matches or beats a fixed-offset policy across the latency--quality frontier. We develop the finite-segment method (Sec.~\ref{sec:fix}) before extending it to unbounded streams (Sec.~\ref{sec:inf}).

\section{ZENDAYA-fix: A Tunable Horizon for a Finite Segment}\label{sec:fix}

ZENDAYA-fix operates on a fully available segment, where source features are extracted from the complete input before decoding begins and the effective output length $\hat{N}$ is predicted from that same context. It addresses context dilution but does not by itself claim end-to-end streaming. ZENDAYA-inf (Sec.~\ref{sec:inf}) removes the assumption by predicting $\hat{N}_k$ per window from already-arrived data. Every guarantee below carries over verbatim to that per-window setting.

\subsection{The Core Mechanism}

Let $X = (X_0, \dots, X_{F-1})$ be $F$ source tokens produced by a frozen modality encoder for the input segment, $\hat{N}$ an effective text length predicted per instance, and $i \in \{1, \dots, \hat{N}\}$ the current generation step. For $\gamma \geq 0$ we define the normalized progress ratio and the source horizon as
\begin{equation}
r_i = \frac{i}{\hat{N}}, \qquad \Omega_i = \left\lceil F \cdot r_i^{\,\gamma} \right\rceil. \label{eq:horizon}
\end{equation}
At step $i$, the cross-attention mask permits attention strictly to source tokens $\{0, \dots, \Omega_i - 1\}$ and blocks all indices beyond. This single structural change, i.e., a dynamically growing cross-attention window governed by one scalar, is the only modification ZENDAYA-fix makes to a standard causal decoder. The ceiling guarantees $\Omega_1 \geq 1$, so the first token always sees at least one source token and the schedule is non-decreasing in $i$ for all $\gamma \geq 0$.

\subsection{The Bandwidth Dial}

The exponent $\gamma$ has a direct physical reading: \textit{the rate at which source information becomes available relative to the rate at which text is produced}. Rather than a switch between an offline mode and a streaming mode, it traces a continuum of deployment conditions.

\begin{enumerate}
    \item \textbf{$\gamma \to 0$ (offline, full bandwidth).} For any $r_i \in (0,1]$, $r_i^{\gamma} \to 1$, so $\Omega_i \to F$ at every step: the model sees the whole segment from step one. \textit{The standard offline decoder is this family's $\gamma = 0$ endpoint, not a separate comparison point.}
    \item \textbf{$\gamma \in (0,1)$ (buffered streaming).} Since $x^{\gamma} > x$ on $(0,1)$, the feed runs ahead of real time, simulating a prefetch buffer. This is a safe default when text outpaces the source, giving early context for entity identification.
    \item \textbf{$\gamma = 1$ (real-time).} Source tokens accumulate linearly with the generation step, matching a constant arrival rate: $\Omega_i = \lceil F \cdot i / \hat{N} \rceil$.
    \item \textbf{$\gamma > 1$ (starved reveal).} Since $x^{\gamma} < x$ on $(0,1)$, the feed starts behind real time and accelerates late. Classically this models a degraded link, and as Sec.~\ref{sec:exp} shows it is also the natural range when the source floods a short target.
\end{enumerate}

\textbf{Exposure budget.} We define the mean source exposure as the average fraction of the source visible per emitted token,
\begin{equation}
\bar{E}(\gamma) \;=\; \frac{1}{\hat{N}} \sum_{i=1}^{\hat{N}} \frac{\Omega_i}{F} \;=\; \frac{1}{\hat{N}} \sum_{i=1}^{\hat{N}} \left(\frac{i}{\hat{N}}\right)^{\!\gamma} + \, O\!\left(\tfrac{1}{F}\right).
\end{equation}
The rounding contributes an $O(1/F)$ term per step; since the source-to-target ratio $\rho=F/\hat{N}$ is bounded below in our regimes, this error is absorbed. The sum is a Riemann sum of $x^{\gamma}$ on $[0,1]$, so
\begin{equation}
\bar{E}(\gamma) \;\longrightarrow\; \int_0^1 x^{\gamma}\, dx \;=\; \frac{1}{1+\gamma}. \label{eq:budget}
\end{equation}
The dial therefore has a second, budget reading: $\gamma = 1$ halves the average exposure, $\gamma = 0.3$ trims it to 77\%, $\gamma = 2$ to 33\%, and $\gamma \to 0$ recovers the full-exposure decoder.

\textbf{Choosing $\gamma$ from the data's geometry.} The useful range of the dial is set by the source-to-target ratio $\rho = F/\hat{N}$, and it does not end at $1$. When text outpaces source ($\rho < 1$, e.g., speech transcription), $\gamma \in (0,1)$ supplies a prefetch buffer at modest latency, and the schedule's first-step horizon $\Omega_1 = \lceil F(1/\hat{N})^{\gamma} \rceil$ is naturally small. When the source floods a short target ($\rho \gg 1$, e.g., captioning long video segments), even moderate $\gamma$ front-loads a large \textit{absolute} number of source tokens, and the just-in-time operating points migrate above $1$. A practical rule of thumb, however, makes this quantitative: to target a first-token budget of $B$ source tokens, set
\begin{equation}
\gamma^{*} \;\approx\; \frac{\ln(F/B)}{\ln \hat{N}}. \label{eq:gamma_rule}
\end{equation}
For our audio geometry ($F \approx 100$, $\hat{N} \approx 165$), a lean buffer of $B \approx 10$ source tokens calls for $\gamma^{*} \approx 0.45$, while a more generous $B \approx 30$ calls for $\gamma^{*} \approx 0.24$. The longer one is willing to buffer, the gentler the exponent. The experiments of Sec.~\ref{sec:exp} confirm this prediction, along with a sharp negative one. Where the source is the \textit{slower} stream ($\rho < 1$), a schedule that deliberately runs behind it ($\gamma > 1$) starves permanently. This is the deeper contrast with a fixed offset. Wait-$k$ owns one constant that buys a single point on the latency axis per trained model, with lag identical for every instance, while $\gamma$ repositions an entire per-instance, length-adaptive schedule.

\subsection{Streaming-Compatible Dependency}

We now make the dependency guarantee precise for arbitrary fixed parameters, letting the decoder consist of:
\begin{itemize}
	\item token and positional embeddings;
	\item $L$ layers, each applying causal self-attention, then cross-attention in which position $i$ may attend only to source indices $\{0, \dots, \Omega_i - 1\}$ (masked logits are set to $-\infty$ before the softmax), then position-wise residual, normalization, and feed-forward maps;
	\item a linear output projection.
\end{itemize}
We say a quantity \textit{depends only on} $X_{<k}$ if it is unchanged under arbitrary modification of $X$ on indices $\geq k$. If $\Omega_i = 0$ the attended set is empty, and we adopt the convention that the cross-attention block outputs zero at position $i$. Every statement below is preserved under this.

\begin{theorem}\label{thm:dep}
For any horizon schedule $i \mapsto \Omega_i$ that is fixed before decoding begins and non-decreasing in $i$, the output logits at step $i$ are a function only of the text prefix $y_{0:i}$ and the source prefix $X_{<\Omega_i}$.
\end{theorem}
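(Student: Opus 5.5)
The plan is an induction over layers that tracks, for every position $j$ and layer $\ell$, exactly which inputs the hidden state $h^{(\ell)}_j$ can depend on. The invariant is: $h^{(\ell)}_j$ is a function only of the text prefix $y_{0:j}$ and the source prefix $X_{<\Omega_j}$. Once this holds for $\ell = L$, the statement follows, because the output logits at step $i$ are a linear projection of $h^{(L)}_i$ alone. Throughout, "depends only on" is used in the sense defined before Theorem~\ref{thm:dep}: invariance under arbitrary modification of the excluded coordinates. It therefore suffices to show that each sublayer maps functions of the allowed arguments to functions of the allowed arguments.

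\textbf{Base case ($\ell=0$).} $h^{(0)}_j$ is the token embedding of $y_j$ plus a positional embedding. It depends only on $y_{0:j}$ and on no source token, so the invariant holds trivially.

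\textbf{Inductive step.} Assume the invariant at layer $\ell-1$ for all positions, and treat the three sublayers in turn.

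\emph{Causal self-attention.} Position $j$ mixes values from positions $m \le j$. By hypothesis, each $h^{(\ell-1)}_m$ with $m \le j$ depends on $y_{0:m} \subseteq y_{0:j}$ and on $X_{<\Omega_m}$. Since $\Omega_m \le \Omega_j$ by monotonicity, $X_{<\Omega_m} \subseteq X_{<\Omega_j}$. The queries, keys, values and the softmax are deterministic functions of these states, so the output at $j$ depends only on $y_{0:j}$ and $X_{<\Omega_j}$. This is the one place where non-decreasingness is essential: without it, an earlier position with a larger horizon could leak future source into position $j$ through self-attention. I expect this step to be the main point worth stating carefully, including that the effect compounds across layers. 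The induction handles that compounding automatically, since the hypothesis is applied to every earlier position.

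\emph{Cross-attention.} The query at $j$ is built from a state that is already admissible. Because masked logits are $-\infty$, the masked source indices $k \ge \Omega_j$ receive exactly zero weight after the softmax, and their keys and values never enter the sum. The output is therefore a function of the query and of $X_{<\Omega_j}$ only. The case $\Omega_j = 0$ is covered by the zero-output convention. Note that the frozen encoder's outputs $X_k$ are treated as the inputs here, so the claim concerns the token sequence $X$ itself.

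\emph{Residual, normalization and feed-forward maps.} These act position-wise, so they preserve the invariant.

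\textbf{Role of the fixed schedule.} The hypothesis that the schedule is fixed before decoding is used to ensure that $\Omega_j$ is itself not a function of $X_{\ge \Omega_j}$. Otherwise the mask would be data-dependent and could leak information. In ZENDAYA the schedule is determined by $F$, $\hat{N}$ and $\gamma$ before decoding begins, so the masks are constants during decoding. Finally, since the parameters are arbitrary throughout, the result holds for trained and untrained weights alike.
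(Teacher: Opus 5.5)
Your layer induction is correct and is essentially the paper's own proof: the same invariant held for all positions at once, monotonicity $\Omega_m \le \Omega_j$ as the step that closes causal self-attention, exact zero softmax weight on the $-\infty$ logits in cross-attention, and position-wise maps preserving the invariant. Carrying $y_{0:j}$ explicitly in the invariant is, if anything, tidier than the paper, which tracks only $X$.

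Your closing paragraph, however, misreads the fixed-schedule hypothesis. Fixing $\Omega$ before decoding does not make $\Omega_j$ independent of $X_{\ge \Omega_j}$. In ZENDAYA-fix the schedule is built from $\hat{N}(X)$, which the length head computes by global average pooling over the \emph{entire} source. The mask is therefore data-dependent in exactly the way you say could leak. What your induction actually uses is only that one and the same mask sequence is applied to $X$ and to any $X'$ that agrees with it below $\Omega_j$. The theorem is therefore a statement conditional on the schedule, as the paper says explicitly. An unconditional no-leakage guarantee additionally needs the schedule itself to be computed from already-arrived data, which is what the causal per-window $\hat{N}_k$ of ZENDAYA-inf supplies in Corollary~1.

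The paper's proof also appends a second induction over decoding steps, to pass from teacher-forced logits to greedily emitted tokens. The statement as written does not require it.
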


\begin{proof}[Proof sketch]
By induction on layers. The base case is immediate: $h_i^{(0)}$ is a function of $y_{0:i}$ alone. At layer $\ell+1$, causal self-attention reads only $\{h_{i'}^{(\ell)} : i' \leq i\}$, and monotonicity gives $\Omega_{i'} \leq \Omega_i$, so every input depends only on $X_{<\Omega_i}$. In cross-attention the masked logits equal $-\infty$, so each index $j \geq \Omega_i$ receives softmax weight exactly zero and is absent from the normalizer, leaving $\sum_{j<\Omega_i} p_{ij}V_j$. The remaining maps are position-wise. Hence $\mathrm{logits}_i$ depends only on $y_{0:i}$ and $X_{<\Omega_i}$. A further induction over decoding steps extends this to the emitted stream. The full argument is in Appendix~\ref{app:thm}.
\end{proof}

\begin{remark}[Monotonicity closes the induction]\label{rem:mono}
The condition is not a smoothness preference: without it, a later position could inherit, through causal self-attention, source information gathered by an earlier position with a wider window, and the induction's self-attention step would fail. The power-law schedule satisfies it for every $\gamma \geq 0$, so the theorem covers the whole dial without per-instance verification.
\end{remark}

\begin{remark}[Scope and substance]\label{rem:scope}
The guarantee is structural, not learned: it concerns the computation graph alone and holds for a randomly initialized model as surely as for a trained one. At $\gamma = 0$, $\Omega_i \equiv F$ and the conclusion degenerates to ``the output depends only on $X_{<F}$''; for $\gamma > 0$, wherever $\Omega_i < F$ the theorem is a genuine, falsifiable independence claim. Finally, because emitted tokens cannot depend on unrevealed source, any latency statistic computed offline equals the one the system exhibits live---and the schedule is known in closed form a priori, not inferred from measured read/write decisions.
\end{remark}
In ZENDAYA-fix the schedule is itself a function of $\hat{N}(X)$, so the guarantee is conditional on that schedule. ZENDAYA-inf removes the conditioning by computing $\hat{N}_k$ from arrived data alone (Cor.~\ref{cor:asynch}).

\subsection{Auxiliary Length Head}\label{sec:lenhead}
The schedule's scale is set per instance by a predicted effective length $\hat{N}$. A lightweight head (global average pooling over the source features followed by two dense layers) is trained jointly with the main decoder, using a cross-entropy loss over discretized length classes weighted by $\lambda_{\mathrm{len}} = 0.1$. During training the schedule itself is built from the ground-truth target length $N=|Y|$ (Alg.~\ref{alg:anaya_fix}). The head is optimized only as an auxiliary task, and its prediction $\hat{N}$ is consulted solely at inference, when $N$ is unavailable. Predicting $\hat{N}$ from a pooled summary is the one place ZENDAYA-fix looks at the whole input ahead of decoding, and ZENDAYA-inf removes even this (Sec.~\ref{sec:inf}).
\begin{algorithm}[t]
	\caption{ZENDAYA-fix: training and inference}
	\label{alg:anaya_fix}
	\begin{algorithmic}[1]
		\STATE \textbf{Train}$(X, Y)$: \COMMENT{parallel, teacher-forced}
		\STATE \quad $N \leftarrow |Y|$ \COMMENT{target length sets the schedule}
		\STATE \quad $\Omega_i \leftarrow \lceil F \cdot (i/N)^{\gamma} \rceil$ \quad for $i = 1 \dots N$
		\STATE \quad $M_{\mathrm{cross}}[i, j] \leftarrow \mathbf{1}[\,j < \Omega_i\,]$ \quad for all $i, j$
		\STATE \quad $\mathrm{logits} \leftarrow \mathrm{Decoder}(Y_{\mathrm{in}}, X;\, M_{\mathrm{cross}})$
		\STATE \quad $\mathcal{L} \leftarrow \mathcal{L}_{\mathrm{text}} + \lambda_{\mathrm{len}} \cdot \mathcal{L}_{\mathrm{length}}$ \COMMENT{$\mathcal{L}_{\mathrm{length}}$: aux. length loss}
		\STATE \textbf{Infer}$(X)$: \COMMENT{sequential, $N$ predicted}
		\STATE \quad $\hat{N} \leftarrow \mathrm{LengthHead}(\mathrm{GAP}(X))$
		\STATE \quad $K_v, V_v \leftarrow W_K X,\, W_V X$
		\FOR{$i = 1 \dots \hat{N}$}
		\STATE $\Omega_i \leftarrow \lceil F \cdot (i/\hat{N})^{\gamma} \rceil$
		\STATE $y_i \leftarrow \mathrm{Decode}(y_{0:i},\, K_v, V_v;\, \text{horizon } [0, \Omega_i))$
		\STATE \textbf{if} $y_i = \texttt{[EOS]}$ \textbf{then break}
		\ENDFOR
		\STATE \quad \textbf{return} $y_1 \dots y_{\hat{N}}$
	\end{algorithmic}
\end{algorithm}

\subsection{Training: 2D Masking}
During teacher-forced training all $\hat{N}$ positions are computed in parallel. We precompute the boolean mask $M_{\mathrm{cross}} \in \mathbb{B}^{\hat{N} \times F}$, whose row $i$ is true exactly on $[0, \Omega_i)$, and pass it to every cross-attention layer (see Alg.~\ref{alg:anaya_fix}). A single forward pass thus reproduces the dependency structure of Theorem~\ref{thm:dep} across the whole batch, with no sequential loop.
\section{ZENDAYA-inf: Unbounded Streams via Context-Conditioned Chunking}\label{sec:inf}

ZENDAYA-fix operates on a finite, fully available segment. Genuinely long or unbounded streams, such as live commentary, continuous interpretation, or rolling transcription, cannot be held in memory. ZENDAYA-inf extends the bandwidth dial to a sliding source window of width $W$, carrying into each window the text $Y_{\mathrm{history}}$ generated for the one before it.

\subsection{Prefix Forcing with Loss Masking}
Training presents the model with consecutive target sentences drawn from the same continuous stream. The earlier sentence acts as historical context: its loss weight is $0$ and its source horizon is fixed at $r = 1$, i.e., fully open, since it lies entirely in the past relative to the active sentence. The progressive horizon $\Omega_{k,j}$ applies only to the active target sentence in window $k$. This prefix forcing is the only change to the decoder's training objective relative to ZENDAYA-fix. For each pair of consecutive ground-truth sentences $(S_{k-1}, S_k)$, the loss is masked to zero on $S_{k-1}$ and applied fully to $S_k$. 
Following standard practice in streaming captioning, $Y_{\mathrm{history}}$ carries the most recently completed sentence rather than the entire transcript, so the deployed condition matches the training condition exactly. 

\subsection{Per-Window Length Prediction (Strictly Causal)}
Theorem~\ref{thm:dep} guarantees that \textit{generation} at step $j$ in window $k$ depends only on the first $\Omega_{k,j}$ source tokens of $w_k$, but $\Omega_{k,j}$ is derived from $\hat{N}_k$, which must itself be predicted without future information. We predict $\hat{N}_k$ from three pooled summaries, each available before window $k$ is decoded: (i) the first $B$ source tokens of window $k$, where $B = \lceil W \cdot (1/N_{\max})^{\gamma} \rceil$ and $N_{\max}$ is the longest decode permitted within a window (Sec.~\ref{sec:exp}), so the buffer never exceeds what the first token would license; and (ii) the features of the preceding window $w_{k-1}$; (iii) the text $Y_{\mathrm{history}}$ generated for that preceding window. Every input to this prediction has already arrived by the time it is used, so no component depends on future source data.

\subsection{Dependency Under Asynchronous Arrival}
Within window $k$, decoding proceeds over steps $j = 1, \dots, \hat{N}_k$ with prescribed horizon $\Omega_{k,j} = \lceil W \cdot (j/\hat{N}_k)^{\gamma} \rceil$, so the schedule never asks for more than $\Omega_{k,j}$ tokens at step $j$ and decoding may begin once the first $\Omega_{k,1}$ have arrived. What it cannot control is whether arrival keeps pace. Let $A_k(t) \in \{0, \dots, W\}$ denote the number of tokens of window $k$ encoded and cached by wall-clock time $t$, and let $t_j$ be the wall-clock time at which decoding reaches step $j$. We define the \textit{effective horizon}
\begin{equation}
\Omega^{\mathrm{eff}}_{k,j} := \min\bigl( \Omega_{k,j},\, A_k(t_j) \bigr), \label{eq:asynch}
\end{equation}
the source prefix actually licensed at step $j$ once both the schedule and physical arrival are accounted for.

\begin{algorithm}[H]
	\caption{ZENDAYA-inf: unbounded streaming inference}
	\label{alg:anaya_inf}
	\begin{algorithmic}[1]
		\STATE $Y_{\mathrm{history}} \leftarrow \varnothing$;\quad $k \leftarrow 0$
		\WHILE{stream continues}
		\STATE buffer first $B$ tokens of window $k$
		\STATE $w_k \leftarrow X[kW : (k{+}1)W]$
		\STATE $\hat{N}_k \leftarrow \mathrm{LengthHead}(\mathrm{buffer}(w_k);\, w_{k-1};\, Y_{\mathrm{history}})$
		\STATE $K_{v,k}, V_{v,k} \leftarrow$ projections of $w_k$
		\STATE $S_k \leftarrow \varnothing$
		\FOR{$j = 1 \dots \hat{N}_k$}
		\STATE $\Omega_{k,j} \leftarrow \lceil W \cdot (j/\hat{N}_k)^{\gamma} \rceil$
		\STATE decode next token attending to $Y_{\mathrm{history}}$ and $w_k$ masked to $[0, \Omega_{k,j})$
		\STATE append token to $S_k$
		\STATE \textbf{if} token $= \texttt{[EOS]}$ \textbf{then break}
		\ENDFOR
		\STATE $Y_{\mathrm{history}} \leftarrow S_k$;\quad $k \leftarrow k + 1$
		\ENDWHILE
	\end{algorithmic}
\end{algorithm}

\begin{corollary}\label{cor:asynch}
For any non-decreasing arrival process $A_k$, the map $j \mapsto \Omega^{\mathrm{eff}}_{k,j}$ is non-decreasing, and Theorem~\ref{thm:dep} applies verbatim with $\Omega^{\mathrm{eff}}_{k,j}$ in place of $\Omega_{k,j}$: generation at step $j$ is a function only of the text so far, the pre-decoding inputs that set the schedule, and the source tokens of $w_k$ with index $< \Omega^{\mathrm{eff}}_{k,j}$. No emitted token ever depends on a source token that has not arrived by the time it is produced.
\end{corollary}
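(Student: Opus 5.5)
The proof has three steps: monotonicity of the effective horizon, a reduction to Theorem~\ref{thm:dep}, and the final no-future-dependence statement.

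\textbf{Step 1: monotonicity.} Fix a window $k$. The prescribed horizon $\Omega_{k,j} = \lceil W (j/\hat{N}_k)^{\gamma} \rceil$ is non-decreasing in $j$ for every $\gamma \ge 0$, because $x \mapsto x^{\gamma}$ and the ceiling are both non-decreasing. Decoding is sequential, so the wall-clock times satisfy $t_1 \le t_2 \le \cdots$. Since $A_k$ is non-decreasing in $t$, the map $j \mapsto A_k(t_j)$ is non-decreasing. The pointwise minimum of two non-decreasing sequences is non-decreasing, which gives the first claim about $\Omega^{\mathrm{eff}}_{k,j}$.

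\textbf{Step 2: the schedule is fixed before it is used.} Theorem~\ref{thm:dep} requires a schedule that is fixed before decoding begins and does not depend on the masked source values. This is the step I expect to be delicate, because $A_k(t_j)$ is only revealed during decoding. I would handle it by conditioning, and I would argue two points.

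\begin{itemize}
\item The prescribed part $\Omega_{k,j}$ is a function of $\hat{N}_k$ only. By Sec.~\ref{sec:inf}, $\hat{N}_k$ is computed from the first $B$ tokens of $w_k$, from $w_{k-1}$, and from $Y_{\mathrm{history}}$. These are exactly the ``pre-decoding inputs'' named in the statement, so we condition on them.
\item The arrival process $A_k$ is an exogenous, physical quantity: it counts tokens, not their contents. So modifying the values of $X$ at indices $\ge \Omega^{\mathrm{eff}}_{k,j}$ does not change $A_k(t_{j'})$ for any $j'$.
\end{itemize}

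I would state this as an explicit modelling assumption: arrival timing, and the decoder's wall-clock progress, are independent of the token values. Under that assumption, and conditional on the pre-decoding inputs and the realized arrival trajectory, the sequence $(\Omega^{\mathrm{eff}}_{k,j})_j$ is a fixed non-decreasing schedule. It also never exceeds $W$.

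\textbf{Step 3: apply the theorem and conclude.} I would apply Theorem~\ref{thm:dep} with source $X = w_k$, horizon $\Omega^{\mathrm{eff}}_{k,j}$, and text prefix consisting of $Y_{\mathrm{history}}$ followed by the tokens of $S_k$ emitted so far.

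\begin{itemize}
\item The history tokens are part of the text prefix, so they enter only through the base case of that proof. Their own horizon is fully open in training, but at inference they attend to no active source beyond the same mask.
\item The convention for $\Omega = 0$ covers steps where nothing has arrived yet.
\item The theorem gives that the logits at step $j$ depend only on the text so far and on $w_k$ restricted to indices $< \Omega^{\mathrm{eff}}_{k,j}$.
\item Greedy or argmax emission is a function of the logits. An induction over $j$, as in the theorem's final step, extends the claim from logits to the emitted tokens.
\end{itemize}

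Finally, $\Omega^{\mathrm{eff}}_{k,j} \le A_k(t_j)$ by definition, so every source index attended at step $j$ has arrived by time $t_j$. The history text depends only on earlier windows, and by induction over $k$ those windows had already arrived. This yields the last sentence of the corollary.
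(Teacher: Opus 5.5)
Your proposal is correct and follows the paper's proof essentially step for step. Monotonicity comes from composing non-decreasing maps and taking a pointwise minimum. Theorem~\ref{thm:dep} is then applied pathwise: your explicit assumption that arrival and step times do not depend on token values is what the paper uses when it fixes a realization of $A_k$ and the $t_j$. The argument closes with $\Omega^{\mathrm{eff}}_{k,j} \le A_k(t_j)$ and an induction over windows. The only piece of the paper's argument you skip is the check $B \le \Omega_{k,1}$ (from $N_{\max} \ge \hat{N}_k$), which the statement as phrased does not need: dependence on the pre-decoding buffer is explicitly allowed, and the buffer has arrived before decoding starts.
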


\begin{proof}
The map $j \mapsto A_k(t_j)$ is non-decreasing as a composition of non-decreasing maps, and the pointwise minimum of two non-decreasing maps is non-decreasing, so $j \mapsto \Omega^{\mathrm{eff}}_{k,j}$ is a valid horizon schedule. Being non-decreasing is the only property Theorem~\ref{thm:dep} requires. The schedule is fixed by $\hat{N}_k$, which itself is computed from the $B$-token buffer, the preceding window $w_{k-1}$, and $Y_{\mathrm{history}}$. All three are available before window-$k$ decoding begins and consist exclusively of arrived data. Applying Theorem~\ref{thm:dep} to the window-$k$ decode with schedule $\Omega^{\mathrm{eff}}_{k,j}$ yields the dependency claim, and $\Omega^{\mathrm{eff}}_{k,j} \leq A_k(t_j)$ ensures that every licensed source token has arrived by time $t_j$.
\end{proof}

The guarantee holds however the prescribed schedule relates to true arrival. When tokens arrive at least as fast as the schedule asks, the effective horizon is the prescribed one and the model restricts itself as designed. Our experiments occupy this regime, and Algorithm~\ref{alg:anaya_inf} instantiates it. When arrival falls behind, as on a degraded link, the model is provably limited to what has arrived, with no separate-case logic.
\section{Related Work}
Our contribution meets three active lines of work.

\textbf{Variable attention masking in streaming ASR.} Training an encoder under masks that expose differing amounts of future audio, so a single model covers both streaming and full-context use, is established practice in speech recognition \cite{swietojanski2023variable, moritz2021dual}. The \textit{principle} of mask-induced streaming compatibility is therefore not new, but its target is different as these methods mask a single encoder's self-attention over its own modality, indexed by \textit{input-frame position}. ZENDAYA masks \textit{cross-modal} attention from decoder to whichever source encoder is in use, indexed by \textit{generation progress}, $i/\hat{N}$.

\textbf{Simultaneous generation via alignment policies.} Simultaneous translation schedules a learned or scheduled alignment between source consumption and target emission, such as wait-$k$ prefix policies \cite{ma2019stacl}, multi-path wait-$k$ training \cite{elbayad2020waitk}, and monotonic attention that learns when to read versus write \cite{chiu2018mocha, ma2020monotonic}. Further, in cross-modal settings SimulSLT couples wait-$k$ with a learned boundary predictor \cite{yin2021simulslt}, and \citet{sun2024adaptive} drive an adaptive policy from a learned translatable-length estimate. ZENDAYA is related in spirit (the $\gamma$-schedule reads as a continuous, length-normalized generalization of a linear prefix policy, moved into the cross-attention mask) but differs in kind. Its visible prefix is a closed-form function of $(\gamma, i, \hat{N})$, needing no policy module, no boundary predictor, and no learned alignment, and Theorem~\ref{thm:dep} follows from the mask structure alone.

\textbf{Low-latency captioning and frame selection.} \citet{hori2021optimizing} train on partial inputs to mimic a full-context teacher with a learned timing detector, retaining 94\% of quality from 28\% of frames---the trade-off they optimize empirically is the one our dial exposes analytically, and their guarantee is behavioral rather than structural. \citet{zhou2024streaming} handle unbounded video via clustering-based fixed-size memory, addressing \textit{unbounded length} where ZENDAYA addresses \textit{per-step dependency structure} within a window. That excess source tokens can hurt quality is documented in captioning \cite{lee2025mams}, countered by input-adaptive selection applied once the whole input is in hand. ZENDAYA instead reduces exposure \textit{dynamically across generation}, never requires the full input up front, and yields Theorem~\ref{thm:dep} as a by-product rather than a separate guarantee to establish.

\section{Experiments}\label{sec:exp}

\begin{figure*}[!t]
\centering
\includegraphics[width=\linewidth]{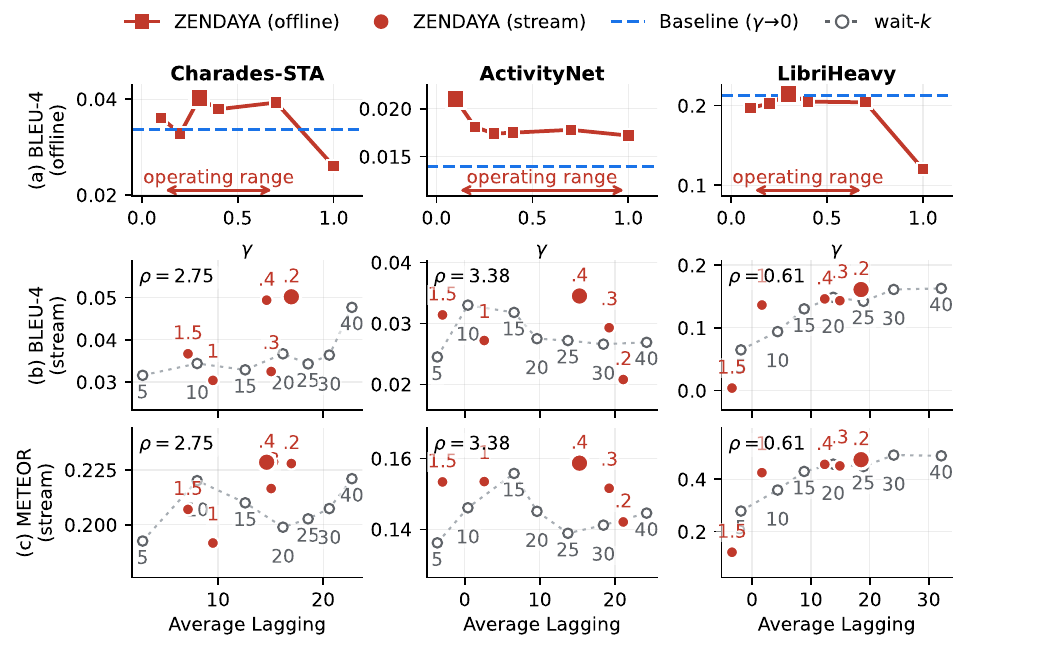}
\caption{(a) Offline BLEU-4 versus $\gamma$; blue dashed is the full-exposure baseline. (b, c) Streaming BLEU-4 and METEOR versus AL; ZENDAYA in red (labeled by $\gamma$, best enlarged), wait-$k$ in gray.}
\label{fig:main}
\end{figure*}

Our experiments answer three questions. \textbf{(Q1)} Offline: does restricting exposure cost or gain quality against the full-exposure member of the same family, at matched capacity? \textbf{(Q2)} Streaming: how does ZENDAYA's latency--quality frontier compare with wait-$k$, the canonical fixed-offset policy, in both source--target regimes? \textbf{(Q3)} Do the analytic dials---the exposure budget of Eq.~\eqref{eq:budget} and the $\gamma^{*}$ rule of Eq.~\eqref{eq:gamma_rule}---predict where each method is strong?

\subsection{Setup}

\begin{table}[t]
\centering
{\scriptsize
\setlength{\tabcolsep}{3.5pt}
\renewcommand{\arraystretch}{0.95}
\resizebox{\columnwidth}{!}{%
\begin{tabular}{l c c c}
\toprule
 & Charades-STA & ActivityNet & LibriHeavy \\
 & \cite{gao2017tall} & \cite{krishna2017anet} & \cite{kang2024libriheavy} \\
\midrule
Modality & video & video & audio \\
Frozen encoder & CLIP B/32 & C3D \texttt{fc7} & Whisper-base \\
& \cite{radford2021clip} & \cite{tran2015c3d} & \cite{radford2023whisper} \\
Token rate & 3\,fps & pre-extracted & $\approx$3\,/s \\
Feature dim. & 512 & 500 (PCA) & 512 \\
$F$ ($=W$) & 55 & 135 & 100 \\
$N$ ($=N_{\max}$) & 20 & 40 & 165 \\
$\rho = F/N$ & 2.75 & 3.38 & 0.61 \\
Regime & source-dense & source-dense & text-dense \\
\bottomrule
\end{tabular}%
}}
\caption{Datasets and configuration. $F$ and $N$ are per-corpus truncation caps and $\rho$ is their ratio; the schedule uses each instance's own lengths. One annotated segment per window, so $W = F$ and $N_{\max} = N$.}
\label{tab:setup}
\end{table}


\textbf{Models.} To attribute differences to the cross-attention schedule rather than to representation quality or pretraining, every decoder is trained \textit{from scratch}: $\sim$29M parameters, 4 layers, $d = 768$, 8 heads, feed-forward 1536, dropout 0.1, a per-dataset 8k BPE vocabulary, and AdamW \cite{loshchilov2019adamw} with cosine decay and warmup to $2\times10^{-4}$ for at most 40 epochs\footnote{Training uniformly converges in fewer than 20 epochs, batch sizes kept strictly uniform across both ZENDAYA and the baseline.}. All models are implemented in TensorFlow and trained on a single T4 GPU on either Kaggle or Google Colab. Source features come from the frozen public encoders of Table~\ref{tab:setup} and are never updated. All models decode greedily to at most $N_{\max}$ steps, stopping at [EOS]. Training is unseeded by default, so exact replication of any individual run is not expected; every headline margin instead carries a paired-bootstrap interval. The fixed-window sweep is additionally repeated under three fixed seeds in Appendix~\ref{app:seeds}, while the streaming results reported below are single runs.

\textbf{Datasets.} The three corpora span the source-to-target ratio $\rho$ on both sides of unity. Charades-STA's action segments overlap heavily in time, so we exclude overlapping consecutive segments when forming streaming $(S_{k-1}, S_k)$ pairs, keeping the causal assumption of Sec.~\ref{sec:inf} intact by construction. ActivityNet Captions supplies the most source-dense geometry we test as its long and heterogeneous segments each carry a single human-written caption. We use the publicly released C3D features so that the visual front end is fixed and independently reproducible. LibriHeavy is large enough that both families would eventually excel, so we train on a 6k-utterance subset with a book-disjoint test split, placing every model in a data-austere regime where the schedule, not scale, must do the work. Table~\ref{tab:setup} gives each corpus's encoder, geometry, and regime.

\textbf{Baselines.} Every comparison stays inside one family. The offline baseline is the $\gamma \to 0$ endpoint of the identical architecture. The wait-$k$ baselines are the identical decoder trained under a wait-$k$ cross-attention mask revealing $g(i) = k + \lceil s\,(i-1) \rceil$ source tokens at step $i$, with stride $s = \rho$ so that a full decode consumes the whole source. This length-ratio catch-up is STACL's strong variant \cite{ma2019stacl}, making the baseline the competitive form of wait-$k$ rather than a strawman. It carries no length head, since a fixed offset needs no length estimate, whereas ZENDAYA must predict the target length, from only the first tokens of a window when streaming. The auxiliary loss might also regularize the shared trunk. We disclose this asymmetry rather than adjudicate its net sign.

\textbf{Protocol and metrics.} Offline results (Table~\ref{tab:fixed}, Fig.~\ref{fig:main}a) are produced by ZENDAYA-fix. Streaming results (Table~\ref{tab:stream}, Fig.~\ref{fig:main}b,~c) are generated by ZENDAYA-inf with per-window $\hat{N}_k$ predicted causally from a $B$-token buffer (Sec.~\ref{sec:inf}). We report corpus BLEU-4 \cite{papineni2002bleu} (smoothed) and METEOR\footnote{We report both metrics because they fail differently: short single-reference captions make 4-gram overlap brittle on video, while BLEU-4 suits the long fluent audio transcripts.} \cite{banerjee2005meteor} throughout, and in the streaming case Average Lagging \cite{ma2019stacl} on the decode-step clock with $g(i) = \Omega_i$. Because the streaming schedule is causal given $\hat{N}_k$, this AL is the one the system exhibits live (Cor.~\ref{cor:asynch}). For wait-$k$, whose lag is content-blind, AL grows monotonically with $k$, though per-instance $\rho$ variation under a dataset-constant stride shifts and spreads it below $k$. Since no analytical map couples a linear $k$ to a power-law $\gamma$, we sweep both and compare frontiers directly: $\gamma \in \{0.1, 0.2, 0.3, 0.4, 0.7, 1.0\}$ offline, and $\gamma \in \{0.2, 0.3, 0.4, 1.0, 1.5\}$ against $k \in \{5, 10, 15, 20, 25, 30, 40\}$ when streaming. Anchor pairs are matched on lag where available, and best-versus-best otherwise. The three headline pairs carry a paired-bootstrap 95\% interval over 1000 resamples. The offline sweep is reported as a trend, its shape visible in Fig.~\ref{fig:main}a.

\subsection{Offline Setting: Dilution (Q1)}

Table~\ref{tab:fixed} and Figure~\ref{fig:main}a show the offline response. The best configuration is not the full-exposure endpoint on \textit{any} of the three datasets, and the margin grows with $\rho$: +19.6\% and +46.9\% on the two source-dense video datasets, where a flood of source tokens gives dilution the most room to operate. Shielding the earliest, least-anchored generation steps from the full source does not merely preserve quality but improves it. Both BLEU-4 margins clear a paired bootstrap ($p{=}0.026$ and $p{<}0.001$), and METEOR moves with them, significantly on ActivityNet. On LibriHeavy, where the audio is already text-dense, neither metric shifts significantly, exactly as that geometry predicts, and quality stays within roughly 4\% across the band $\gamma \in [0.1, 0.7]$ before collapsing at $\gamma = 1$. On the source-dense video datasets, every setting across the swept band exceeds the baseline. That breadth makes the dial practical, since $\gamma$ can be raised for latency well before quality degrades. These offline numbers come from single unseeded runs. A three-seed replication of the entire fixed-window sweep (Appendix~\ref{app:seeds}, 63 training runs) supports the geometric claim on all three corpora and sharpens its form, but it does not support the $\gamma$ selected here for ActivityNet, and under seeded means the two margins quoted above become $+34\%$ and $+20\%$, reversing their order. Appendix~\ref{app:seeds} states precisely which entries are affected and what replaces them.

\begin{table}[t]
	\centering
	{\scriptsize
		\setlength{\tabcolsep}{4pt}
		\renewcommand{\arraystretch}{0.95}
		\resizebox{\columnwidth}{!}{%
\begin{tabular}{@{}l cc cc cc@{}}
			\toprule
			& \multicolumn{2}{c}{BLEU-4} & \multicolumn{2}{c}{METEOR} & & \\
			\cmidrule(lr){2-3} \cmidrule(lr){4-5}
			Dataset & Base & Best ZENDAYA & Base & ZENDAYA & Rel. & $\bar{E}$ \\
			\midrule
			Charades    & 0.0336 & \textbf{0.0402} ($\gamma$=0.30) & 0.2096 & \textbf{0.2115} & +19.6\% & 0.77 \\
			ActivityNet & 0.0143 & \textbf{0.0210} ($\gamma$=0.10) & 0.1139 & \textbf{0.1264} & +46.9\% & 0.91 \\
			LibriHeavy  & 0.2125 & \textbf{0.2143} ($\gamma$=0.30) & 0.5627 & 0.5608 & +0.8\% & 0.77 \\
			\bottomrule
	\end{tabular}%
}}
	\caption{Fixed-window results. Best ZENDAYA is selected by BLEU-4; Rel. is that margin. Bootstrap intervals for every cell are in Appendix~\ref{app:ci}.}
	\label{tab:fixed}
\end{table}

\begin{table}[t]
	\centering
	{\scriptsize
		\setlength{\tabcolsep}{3.5pt}
		\renewcommand{\arraystretch}{0.95}
		\resizebox{\columnwidth}{!}{%
\begin{tabular}{@{}l cc ccc@{}}
			\toprule
			\textit{Charades-STA} & \textsc{z}-1.5 & \textsc{z}-0.2 & \textsc{w}-10 & \textsc{w}-20 & \textsc{w}-40 \\
			\midrule
			BLEU-4 & 0.0367 & \textbf{0.0502} & 0.0344 & 0.0367 & 0.0477 \\
			METEOR & 0.2070 & \textbf{0.2279} & 0.2201 & 0.1989 & 0.2210 \\
			AL & \textbf{7.1} & 17.0 & 8.0 & 16.2 & 22.7 \\
			\midrule
			\textit{ActivityNet} & \textsc{z}-1.5 & \textsc{z}-0.4 & \textsc{w}-5 & \textsc{w}-10 \\
			\midrule
			BLEU-4 & 0.0314 & \textbf{0.0345} & 0.0245 & 0.0330 \\
			METEOR & 0.1534 & \textbf{0.1587} & 0.1362 & 0.1461 \\
			AL & $-3.0$ & 15.3 & $\mathbf{-3.7}$ & 0.4 \\
			\midrule
			\textit{LibriHeavy} & \textsc{z}-1.0 & \textsc{z}-0.2 & \textsc{w}-10 & \textsc{w}-40 \\
			\midrule
			BLEU-4 & 0.1363 & 0.1610 & 0.0939 & \textbf{0.1628} \\
			METEOR & 0.4255 & 0.4751 & 0.3590 & \textbf{0.4899} \\
			AL & \textbf{1.7} & 18.5 & 4.3 & 32.1 \\
			\bottomrule
	\end{tabular}%
}}
	\caption{Streaming anchors. z-$\gamma$ is ZENDAYA at that $\gamma$, w-$k$ is wait-$k$. Best in row is bold.}
	\label{tab:stream}
\end{table}

\begin{table}[t]
	\centering
	{\scriptsize
		\setlength{\tabcolsep}{4pt}
		\resizebox{\columnwidth}{!}{%
\begin{tabular}{@{}ll r@{~}l r@{~}l@{}}
			\toprule
			Dataset & Anchor pair & \multicolumn{2}{c}{$\Delta$ BLEU-4} & \multicolumn{2}{c}{$\Delta$ METEOR} \\
			\midrule
			Charades-STA & z-0.2 vs w-40 & 0.0024 & [$-$.003,\,.008] & \textbf{0.0070} & [.001,\,.013] \\
			ActivityNet  & z-0.4 vs w-10 & 0.0014 & [$-$.002,\,.005] & \textbf{0.0127} & [.008,\,.018] \\
			LibriHeavy   & z-1.0 vs w-10 & \textbf{0.0426} & [.034,\,.052] & \textbf{0.0668} & [.052,\,.082] \\
			\bottomrule
	\end{tabular}%
}}
	\caption{Paired-bootstrap 95\% intervals (1000 resamples), ZENDAYA (z-$\gamma$) vs. wait-$k$ (w-$k$).  Deltas are computed on unrounded scores. Bold excludes zero.}
	
	\label{tab:ci}
\end{table}

\subsection{Streaming: One Dial, Two Regimes (Q2)}

We now turn to the streaming setting. Figure~\ref{fig:main}b,~c plots the latency--quality frontiers, Table~\ref{tab:stream} the anchor points that carry the argument, and Table~\ref{tab:ci} their bootstrap intervals.

\textbf{Charades-STA.} At $\gamma = 0.2$, ZENDAYA exceeds the strongest offset found anywhere on the wait-$k$ curve on both quality metrics while arriving \textbf{5.7 source tokens earlier}. The METEOR margin is significant, the BLEU margin positive but within noise. Against wait-20, its nearest neighbour in lag, the same setting yields \textbf{+36.8\% BLEU} and \textbf{+14.6\% METEOR}. Neither knob traces a tidy single-peaked curve, the two differing in kind rather than in tuning (Sec.~\ref{sec:fix}). Paired decodes appear in Appendix~\ref{app:qual}.

\textbf{ActivityNet Captions.} This geometry makes a sharp prediction via Eq.~\eqref{eq:gamma_rule}: with a lean first-token budget the just-in-time operating points should migrate above $1$. At comparable negative lag, where a fixed offset must starve itself, $\gamma = 1.5$ beats wait-5 by \textbf{+28\% BLEU} and \textbf{+12.6\% METEOR}. In best-versus-best, $\gamma = 0.4$ exceeds wait-10 on both metrics with METEOR advantage significant. Extending the offset to wait-40 does not recover the gap: it returns 0.0269 BLEU-4 and 0.1446 METEOR at an average lagging of 24.2, inside the existing frontier and below wait-10 on both metrics.

\textbf{LibriHeavy.} In text-dense streams the transcript outruns the signal, exactly where content-blindness costs most. At the low-lag end the fixed offset collapses while the schedule holds: $\gamma = 1.0$ retains 0.1363 BLEU at a lag of \textbf{1.7 source tokens}, 64\% of the offline optimum and 45\% above the nearest fixed offset, which needs 2.6 more tokens of lag to reach 0.0939. With METEOR up 18.5\%, both margins are decisive under the paired bootstrap (Table~\ref{tab:ci}). The regime symmetry also carries a sharp negative prediction. Since the source is here the \textit{slower} stream, a schedule that deliberately runs behind it ($\gamma = 1.5$) starves permanently and collapses to BLEU 0.0039, visible at the foot of Fig.~\ref{fig:main}b, confirming that the dial's useful range is set by $\rho$.

\subsection{Synthesis (Q3)}

Where a fixed offset must be small relative to the instance, content-blindness is expensive and ZENDAYA is clearly ahead. Where a single offset happens to suit a dataset well, ZENDAYA still beats it on METEOR at lower lag (Charades), and at none of the three anchor pairs is it significantly behind on either metric. The $\gamma^{*}$ rule predicted, before the runs, both regimes' operating points and the collapse of $\gamma>1$ where the source is slower. One mechanism, one theorem, one training recipe served $\rho = 0.61$ and $\rho = 3.38$ unmodified.

\section{Conclusion}

A single number decides how much of a stream the decoder may see while it writes. Turn it down and it reads everything upfront like a usual decoder. Then turn it up and it keeps pace with a live feed and never leans on words that have not arrived. It also fixes, in closed form, how much source each written word consumes, and its guarantee is structural, not learned. So, the latency measured offline is the latency delivered live.
The finding we did not go looking for is the one worth keeping. A compact decoder trained from scratch, shown less of its input, writes better than the same decoder shown all of it, and holds that advantage across corpora whose source and target run at opposite speeds. Seeing less tunably does better.

\textit{Future work.} We set the dial by hand, once per corpus, though the right horizon surely varies window to window: a crowded minute of speech and a static shot do not deserve the same one. Moreover, learning $\gamma$ per window is theorem-preserving, since fixing it before a window decodes keeps the schedule monotone, so a per-window learned $\gamma$ inherits every guarantee.

\textit{Limitations.} The fixed-window sweep is replicated over three seeds (Appendix~\ref{app:seeds}), and that replication revises the ActivityNet operating point of Table~\ref{tab:fixed}; the streaming results remain single runs, and the optimisation noise measured offline plausibly reaches their $\gamma$ and $k$ selections too. Learned adaptive policies remain unevaluated head-to-head. And while Corollary~\ref{cor:asynch} covers arbitrary arrival, we ran only its window-synchronized case. Thus, the asynchronous regime is proved but not measured.

\bibliography{anaya}

\clearpage
\appendix
\setcounter{table}{0}
\setcounter{figure}{0}
\renewcommand{\thetable}{A\arabic{table}}
\renewcommand{\thefigure}{A\arabic{figure}}

\section*{Appendices}
\noindent
These appendices give the full proofs of Theorem~\ref{thm:dep} and
Corollary~\ref{cor:asynch}, paired-bootstrap intervals behind every entry of
Table~\ref{tab:fixed}, eighteen streaming decode clips on Charades-STA, and a
three-seed replication of the entire fixed-window sweep. That replication
supports the central claim on all three corpora and sharpens its geometric form,
but it does not support the $\gamma$ selected for ActivityNet in
Section~\ref{sec:exp}; Appendix~\ref{app:seeds} lists precisely which entries are
affected and what replaces them. Theorem, corollary and remark numbering follows
the main text, as does notation.

\section{Full Proof of Theorem~1}
\label{app:thm}

\paragraph{Setting.} Let $M$ be a decoder built from token and positional embeddings, then $L$
identical layers, then a linear output projection. Each layer applies causal self-attention,
then cross-attention in which position $i$ may attend only to source indices
$\{0, \dots, \Omega_i - 1\}$, with masked logits set to $-\infty$ before the softmax, then
position-wise residual, normalization, and feed-forward maps. Parameters are arbitrary, trained
or untrained, and no property of the training procedure is used anywhere below. The schedule
$i \mapsto \Omega_i$ is fixed before decoding begins. In ZENDAYA-fix it is determined by
$\hat{N}(X)$, so the guarantee is conditional on that schedule; in ZENDAYA-inf the schedule is
determined by arrived data alone, which removes the conditioning.

\paragraph{Definition.} A quantity \emph{depends only on} $X_{<k}$ if, for any two source
sequences $X$ and $X'$ agreeing on indices $\{0, \dots, k-1\}$, the quantity takes the same
value, however $X$ and $X'$ differ on indices $\geq k$.

\setcounter{theorem}{0}
\begin{theorem}
For any horizon schedule $i \mapsto \Omega_i$ that is fixed before decoding begins and
non-decreasing in $i$, the output logits at step $i$ are a function only of the text prefix
$y_{0:i}$ and the source prefix $X_{<\Omega_i}$.
\end{theorem}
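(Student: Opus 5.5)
The plan is a double induction, first over layers and then over positions within a layer, using the definition of ``depends only on'' stated above. Fix two source sequences $X, X'$ and a step $i$ such that $X$ and $X'$ agree on indices $\{0,\dots,\Omega_i-1\}$. Fix the same text prefix $y_{0:i}$ for both. Because the schedule is fixed before decoding, $\Omega_{i'}$ is the same number under $X$ and $X'$ for every $i'$; this is exactly where the conditioning on the schedule enters. The goal is to show that the hidden state $h^{(\ell)}_{i'}$ computed from $X$ equals the one computed from $X'$ for every layer $\ell \in \{0,\dots,L\}$ and every position $i' \le i$. Applying the position-wise linear projection at position $i$ then gives equal logits.

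\textbf{Induction hypothesis.} The claim $P(\ell)$ is that for all $i' \le i$, $h^{(\ell)}_{i'}(X) = h^{(\ell)}_{i'}(X')$. The base case $P(0)$ holds because embeddings depend only on $y_{0:i'}$ and the position, and never on $X$.

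\textbf{Inductive step.} To pass from $P(\ell)$ to $P(\ell+1)$, I would treat the three sublayers in turn at a position $i' \le i$.

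First, causal self-attention at $i'$ reads only $h^{(\ell)}_{i''}$ for $i'' \le i' \le i$. These are equal under $X$ and $X'$ by $P(\ell)$, so the output is equal.

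Second, cross-attention at $i'$ forms logits $q_{i'}\cdot k_j$ for $j<\Omega_{i'}$ and $-\infty$ otherwise. Since $\exp(-\infty)=0$, the softmax normalizer and the weights involve only $j<\Omega_{i'}$, and the output is $\sum_{j<\Omega_{i'}}p_{i'j}W_VX_j$. Monotonicity gives $\Omega_{i'}\le\Omega_i$, so every $X_j$ used lies in the agreement region. The query equals under both inputs by the previous sublayer. Source keys and values are per-token projections of the frozen features, hence equal for $j<\Omega_i$. I should check that the paper's setup has no source-side self-attention mixing future tokens into $K,V$; the stated architecture projects $X$ directly, so this holds. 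The empty case $\Omega_{i'}=0$ gives output zero by convention, which is trivially equal.

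Third, the residual, normalization and feed-forward maps are position-wise, so equality of their inputs at $i'$ gives equality of their outputs. This completes $P(\ell+1)$.

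\textbf{Main obstacle.} The delicate point is the self-attention step. There the hypothesis must hold at \emph{all} earlier positions $i'$ with respect to the \emph{same} prefix $X_{<\Omega_i}$. That is why I state $P(\ell)$ uniformly over $i'\le i$ for a fixed agreement length $\Omega_i$, rather than claiming that ``$h_{i'}$ depends on $X_{<\Omega_{i'}}$'' separately for each position. Monotonicity ($\Omega_{i'}\le\Omega_i$) is precisely what lets a fixed agreement window of length $\Omega_i$ cover every earlier position's cross-attention, in line with Remark~\ref{rem:mono}. The exact-zero behaviour of masked softmax weights needs the $-\infty$ convention explicitly rather than a large negative constant.

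\textbf{Extension to emitted tokens.} I would finish by noting that if earlier tokens $y_{1:i-1}$ are themselves generated, a second induction on the decoding step extends the claim. Each $y_{i'}$ is a function of the logits at $i'$, which depend only on $X_{<\Omega_{i'}}\subseteq X_{<\Omega_i}$, so the whole emitted prefix is fixed by $X_{<\Omega_i}$.
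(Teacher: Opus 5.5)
Your proof is correct and follows the paper's argument: an induction over layers through causal self-attention, $-\infty$-masked cross-attention and the position-wise maps, then a second induction over decoding steps for the emitted tokens. The only difference is bookkeeping. You fix the agreement window at $\Omega_i$ and use monotonicity at the cross-attention sublayer of the earlier positions $i' \le i$, whereas the paper carries the per-position hypothesis that $h^{(\ell)}_{i'}$ depends only on $X_{<\Omega_{i'}}$ and uses monotonicity at the self-attention step instead. That formulation works equally well, so your warning against the per-position version is a matter of taste rather than necessity.
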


\begin{proof}
Write $h^{(\ell)}_i$ for the hidden state at position $i$ after layer $\ell$, with $h^{(0)}_i$
the input embedding. We prove by induction on $\ell$ that $h^{(\ell)}_i$ depends only on
$X_{<\Omega_i}$, for every position $i$ simultaneously.

\textbf{Base case.} $h^{(0)}_i$ is a function of $y_{0:i}$ alone and is independent of $X$,
hence depends only on $X_{<\Omega_i}$ vacuously.

\textbf{Inductive step.} Suppose $h^{(\ell)}_{i'}$ depends only on $X_{<\Omega_{i'}}$ for every
position $i'$. Fix a position $i$.

\textit{(i) Causal self-attention.} The output at position $i$ is a function of
$\{h^{(\ell)}_{i'} : i' \leq i\}$. For each such $i'$ the schedule is non-decreasing, so
$\Omega_{i'} \leq \Omega_i$ and therefore
$\{0, \dots, \Omega_{i'}-1\} \subseteq \{0, \dots, \Omega_i-1\}$. A quantity determined by a
subset of $X_{<\Omega_i}$ is determined by $X_{<\Omega_i}$, so every input to the block, and
hence its output $\tilde{h}_i$, depends only on $X_{<\Omega_i}$. This step is where
monotonicity is indispensable: without it a position $i$ could inherit, through an earlier
position $i'$ with $\Omega_{i'} > \Omega_i$, information about source tokens beyond its own
horizon.

\textit{(ii) Masked cross-attention.} Let $e_{ij}$ denote the pre-softmax logit for source
index $j$. By construction $e_{ij} = -\infty$ for every $j \geq \Omega_i$, so
$\exp(e_{ij}) = 0$ and those indices contribute nothing to either the numerator or the
normalizer of the softmax. The attention weights therefore satisfy $p_{ij} = 0$ for all
$j \geq \Omega_i$, and the block output reduces exactly to
\[
\sum_{j=0}^{F-1} p_{ij}\, V_j \;=\; \sum_{j < \Omega_i} p_{ij}\, V_j .
\]
Each surviving key $K_j = X_j W_K$ and value $V_j = X_j W_V$ with $j < \Omega_i$ depends only
on $X_{<\Omega_i}$, and the query is derived from $\tilde{h}_i$, which depends only on
$X_{<\Omega_i}$ by (i). Every factor entering the block output is therefore accounted for.

\textit{(iii) Position-wise maps.} The state $h^{(\ell+1)}_i$ is obtained from $\tilde{h}_i$
and the cross-attention output by residual addition, normalization, and a feed-forward map,
all deterministic and applied at position $i$ alone. It is thus a function of quantities
already shown to depend only on $X_{<\Omega_i}$.

Closing the induction at layer $L$, the state $h^{(L)}_i$ depends only on $X_{<\Omega_i}$, and
therefore so do the logits $\mathrm{logits}_i = W_{\mathrm{out}} h^{(L)}_i$.

\textbf{Extension to closed-loop decoding.} The argument above concerns teacher-forced logits,
where $y_{0:i}$ is given. Under a deterministic decoding rule, greedy selection in our
experiments, it extends to generated text by a second induction, this time over steps. The
first token is produced from a fixed start symbol, so the base case is immediate. Suppose every
emitted token $y_{i'}$ with $i' < i$ depends only on $X_{<\Omega_{i'}}$. Monotonicity gives
$X_{<\Omega_{i'}} \subseteq X_{<\Omega_i}$, so the whole prefix $y_{0:i}$ is a function of
$X_{<\Omega_i}$. Combining this with the layer induction, $\mathrm{logits}_i$ and hence $y_i$
are functions of $X_{<\Omega_i}$ alone, and the emitted stream up to step $i$ inherits the same
dependency.
\end{proof}

\paragraph{Remark on the degenerate case.} If $\Omega_i = 0$ the attended set is empty and we
adopt the convention that the cross-attention block outputs zero at position $i$, a pure
language-modeling step, under which every statement above is preserved. This case cannot arise
under the ceiling construction $\Omega_i = \lceil F \cdot (i/\hat{N})^{\gamma} \rceil$,
since $F \geq 1$ and $(i/\hat{N})^{\gamma} > 0$ give $\Omega_i \geq \Omega_1 \geq 1$ for
every admissible $F$, $\hat{N}$ and $\gamma$.\footnote{Our fixed-window implementation rounds
down rather than up, and so admits $\Omega_1 = 0$ where $F < \hat{N}^{\gamma}$. The threshold grows with $\gamma$ and so is easiest to meet at the largest exponents we sweep, as for instance on LibriHeavy at
$\gamma = 1$, whose source is generally shorter than its target. The convention above governs
those steps.}

\section{Full Proof of Corollary~1}
\label{app:cor}

Within window $k$, decoding proceeds over steps $j = 1, \dots, \hat{N}_k$ with prescribed
horizon $\Omega_{k,j} = \lceil W \cdot (j/\hat{N}_k)^{\gamma} \rceil$, non-decreasing in $j$ by
Remark~\ref{rem:mono}. The schedule never asks for more than $\Omega_{k,j}$ source tokens
at step $j$, so decoding may begin once the first $\Omega_{k,1}$ have arrived. What the
schedule cannot control is whether arrival keeps pace with it, and this section supplies the
guarantee for the case where it does not.

Let $A_k(t) \in \{0, \dots, W\}$ denote the number of source tokens of window $k$ encoded and
cached by wall-clock time $t$. It is non-decreasing in $t$, since tokens cannot un-arrive. Let
$t_j$ denote the wall-clock time at which decoding reaches step $j$; since generation proceeds
strictly forward in time, $t_j$ is non-decreasing in $j$. Define the \textbf{effective
horizon}
\begin{equation}
\Omega^{\mathrm{eff}}_{k,j} := \min\big(\Omega_{k,j},\, A_k(t_j)\big),
\end{equation}
the source prefix actually licensed at step $j$ once both the schedule and physical arrival are
accounted for.

\setcounter{corollary}{0}
\begin{corollary}
For any non-decreasing arrival process $A_k$, the map $j \mapsto \Omega^{\mathrm{eff}}_{k,j}$
is non-decreasing, and Theorem~1 applies verbatim with $\Omega^{\mathrm{eff}}_{k,j}$ in place
of $\Omega_{k,j}$. Generation at step $j$ is then a function only of the text generated so far,
the pre-decoding inputs that fix the schedule, and the source tokens of $w_k$ with index
$< \Omega^{\mathrm{eff}}_{k,j}$. In particular no emitted token depends on a source token that
has not arrived by the time that token is produced.
\end{corollary}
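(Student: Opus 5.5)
The plan has three steps: show that the effective horizon is a valid schedule, reduce the claim to Theorem~1, and translate the conclusion into wall-clock terms.

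\textbf{Monotonicity.} The prescribed map $j \mapsto \Omega_{k,j} = \lceil W (j/\hat{N}_k)^{\gamma}\rceil$ is non-decreasing for every $\gamma \ge 0$. This holds because $x \mapsto x^{\gamma}$ is non-decreasing on $(0,1]$ and the ceiling preserves order; it is the content of Remark~\ref{rem:mono}. Next, $A_k$ is non-decreasing in $t$ because cached tokens cannot un-arrive, and $t_j$ is non-decreasing in $j$ because decoding moves forward in time. So $j \mapsto A_k(t_j)$ is non-decreasing as a composition of non-decreasing maps. A pointwise minimum of two non-decreasing maps is non-decreasing: if $j \le j'$, then $\min(a_j,b_j) \le a_j \le a_{j'}$, and likewise for $b$, so $\min(a_j,b_j) \le \min(a_{j'},b_{j'})$. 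Hence $\Omega^{\mathrm{eff}}_{k,j}$ is non-decreasing.

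\textbf{Applying Theorem~1.} Theorem~1 uses two hypotheses: monotonicity, and the requirement that the schedule be fixed before decoding. Monotonicity was just shown. The fixedness requirement is the main obstacle. $\Omega^{\mathrm{eff}}_{k,j}$ depends on $A_k(t_j)$, which is only revealed during decoding, so it is not literally fixed in advance.

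I would handle this by inspecting where fixedness enters the proof in Appendix~\ref{app:thm}. It is used only so that the mask at position $i$ is not itself a function of source tokens at or beyond the horizon. The arrival process is exogenous: it is determined by the physical stream and wall-clock time, not by the content of $X_{\ge \Omega^{\mathrm{eff}}}$. I would therefore condition on the realized arrival trajectory, or equivalently treat $A_k$ as an exogenous input alongside $\hat{N}_k$. Under that conditioning the effective schedule is a fixed non-decreasing sequence, and the layer induction goes through verbatim. The closed-loop step induction also goes through, since $X_{<\Omega^{\mathrm{eff}}_{k,j'}} \subseteq X_{<\Omega^{\mathrm{eff}}_{k,j}}$ for $j' \le j$.

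The inputs that set $\Omega_{k,j}$ are the $B$-token buffer, $w_{k-1}$, and $Y_{\mathrm{history}}$ (Sec.~3.2). Each of these has arrived before window $k$ is decoded, so they enter the conclusion only as the stated pre-decoding inputs. The self-attention over $Y_{\mathrm{history}}$ is handled the same way: those positions use a fully open horizon onto past data, which is already available.

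\textbf{Arrival-time guarantee.} Theorem~1 says that step $j$ depends only on source indices $< \Omega^{\mathrm{eff}}_{k,j}$. Since $\Omega^{\mathrm{eff}}_{k,j} \le A_k(t_j)$, every such index has been encoded and cached by time $t_j$. Hence no emitted token depends on a source token that has not yet arrived.
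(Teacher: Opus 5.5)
Your proposal is correct and follows essentially the same route as the paper's proof: monotonicity of $\Omega^{\mathrm{eff}}_{k,j}$ via composition and pointwise minimum, a pathwise application of Theorem~1 to handle the ``fixed before decoding'' hypothesis, a check that $\hat{N}_k$ is built only from arrived data, and the bound $\Omega^{\mathrm{eff}}_{k,j} \le A_k(t_j)$ for the arrival guarantee. The paper explicitly fixes the realized step times $t_j$ together with $A_k$; your conditioning on the realized trajectory must include them as well, since $\Omega^{\mathrm{eff}}_{k,j}$ depends on $A_k(t_j)$. The paper also notes $B \le \Omega_{k,1}$, which is not needed for the stated conclusion.
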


\begin{proof}
The map $j \mapsto A_k(t_j)$ is non-decreasing, being a composition of the non-decreasing maps
$j \mapsto t_j$ and $t \mapsto A_k(t)$. For $j \leq j'$ we then have $\Omega_{k,j} \leq
\Omega_{k,j'}$ by monotonicity of the schedule and $A_k(t_j) \leq A_k(t_{j'})$ by the above,
so $\min(\Omega_{k,j}, A_k(t_j)) \leq \min(\Omega_{k,j'}, A_k(t_{j'}))$, that is
$\Omega^{\mathrm{eff}}_{k,j} \leq \Omega^{\mathrm{eff}}_{k,j'}$. Theorem~1 further asks that the
schedule be fixed before decoding begins, and $\Omega^{\mathrm{eff}}$ is not, since it moves
with the arrival process as that process unfolds. We therefore apply the theorem pathwise: fix
any realization of $A_k$ together with the realized step times $t_j$, and along that path
$j \mapsto \Omega^{\mathrm{eff}}_{k,j}$ is a determinate non-decreasing integer sequence that no
generated token can alter, which is all that either induction of Theorem~1 uses. The dependency
claim therefore holds on every path, hence unconditionally.

It remains to check that the quantities fixing the schedule are themselves free of future
information. These are $\hat{N}_k$, and through it the initial buffer of $B$ source tokens of
window $k$, the pooled features of the preceding window $w_{k-1}$, and the text
$Y_{\mathrm{history}}$ generated for that preceding window. All three are available before
window-$k$ decoding begins, and $B \leq \Omega_{k,1}$ because $N_{\max} \geq \hat{N}_k$, so the
buffer never exceeds what the first token already licenses. The text side of the induction
concerns only $Y_{\mathrm{history}}$ and the target prefix, neither of which involves $A_k$.

Applying Theorem~1 to the window-$k$ decode with schedule $\Omega^{\mathrm{eff}}_{k,j}$ yields
the dependency claim, and $\Omega^{\mathrm{eff}}_{k,j} \leq A_k(t_j)$ ensures that every
licensed source token has arrived by time $t_j$.
\end{proof}

\paragraph{Extension across windows.} The corollary is stated for a single window. Extending it
to the unbounded stream is an induction over $k = 0, 1, 2, \dots$. The base case $k = 0$ is the
corollary with empty history. For the inductive step, the quantities fixing window $k$'s
schedule consist of the preceding window and the text generated for it, both of which the
inductive hypothesis places in the past, so the corollary applies to window $k$ unchanged.
Since each window's guarantee is stated relative to data already arrived, the conjunction over
all $k$ gives the guarantee for the stream as a whole.

\paragraph{Relation to the evaluated system.} The guarantee holds regardless of the
relationship between the prescribed schedule and true arrival. Where tokens arrive at least as
fast as the schedule requests them, $\Omega^{\mathrm{eff}}_{k,j} = \Omega_{k,j}$ and the model
self-restricts exactly as prescribed, which is the regime our experiments evaluate, precomputed
features making arrival effectively instantaneous. Where arrival falls behind, as on a degraded
link, the model is automatically and provably limited to what has actually arrived, with no
separate-case logic required. The fully asynchronous regime is therefore proved here but not
measured, as the limitations note in Section~6.

\section{Paired-Bootstrap Intervals, Fixed-Window Setting}
\label{app:ci}

Table~\ref{tab:supp-ci} gives the interval behind every entry of Table~\ref{tab:fixed}. For
each dataset we compare ZENDAYA at its selected $\gamma$ against the $\gamma \to 0$ endpoint of
the identical architecture, on the same test set in the same order, resampling clips with
replacement 1000 times. Reported values
are means over resamples and so differ from Table~\ref{tab:fixed} in the fourth decimal.

Both margins shrink monotonically as the source-to-target ratio $\rho$ falls, and neither metric
moves on the text-dense corpus: with $\rho < 1$ there is no surplus source to withhold, so
restricting exposure can neither help nor hurt.

\begin{table*}[t]
\centering
{
\small
\setlength{\tabcolsep}{5pt}
\begin{tabular}{@{}ll cc cc l@{}}
\toprule
 & & \multicolumn{2}{c}{ZENDAYA} & \multicolumn{2}{c}{Baseline ($\gamma \to 0$)} & \\
\cmidrule(lr){3-4} \cmidrule(lr){5-6}
Dataset & Metric & Score & 95\% CI & Score & 95\% CI & $\Delta$ (95\% CI) \\
\midrule
Charades-STA & BLEU-4 & 0.0401 & [0.0359, 0.0443] & 0.0337 & [0.0297, 0.0384] & \textbf{0.0064} [0.0013, 0.0113] \\
($\gamma = 0.30$) & METEOR & 0.2115 & [0.2062, 0.2166] & 0.2096 & [0.2043, 0.2151] & 0.0019 [$-$0.0041, 0.0080] \\
\addlinespace[3pt]
ActivityNet & BLEU-4 & 0.0210 & [0.0176, 0.0246] & 0.0142 & [0.0116, 0.0168] & \textbf{0.0068} [0.0028, 0.0107] \\
($\gamma = 0.10$) & METEOR & 0.1264 & [0.1215, 0.1310] & 0.1139 & [0.1095, 0.1185] & \textbf{0.0126} [0.0071, 0.0179] \\
\addlinespace[3pt]
LibriHeavy & BLEU-4 & 0.2142 & [0.2008, 0.2280] & 0.2123 & [0.1979, 0.2260] & 0.0019 [$-$0.0039, 0.0079] \\
($\gamma = 0.30$) & METEOR & 0.5608 & [0.5432, 0.5787] & 0.5627 & [0.5446, 0.5800] & $-$0.0019 [$-$0.0099, 0.0066] \\
\bottomrule
\end{tabular}
}
\caption{Fixed-window paired bootstrap, 1000 resamples. Bold marks intervals excluding zero.
Two-sided $p$ on $\Delta$ BLEU-4: 0.026 Charades-STA, $<$0.001 ActivityNet, 0.560 LibriHeavy.}
\label{tab:supp-ci}
\end{table*}

\section{Qualitative Sample}
\label{app:qual}

Table~\ref{tab:qual-supp} reports eighteen Charades-STA clips with both systems' decodes on each, referred to from
the Q2 discussion of Section~5. The protocol is symmetric: we score every decode of both systems
with sentence-level BLEU-4, take the nine clips on which each system scores highest keeping one
per distinct reference, and report both systems' decodes of each. Each half is chosen on one
system's own best terms.

\begin{table*}[t]
	\centering
	{
	\scriptsize
	\setlength{\tabcolsep}{2pt}
	\begin{tabular}{@{}>{\raggedright\arraybackslash}p{0.325\textwidth}>{\raggedright\arraybackslash}p{0.325\textwidth}>{\raggedright\arraybackslash}p{0.325\textwidth}@{}}
		\toprule
		\multicolumn{3}{@{}l@{}}{\textbf{Clips where ZENDAYA scores highest}}\\
		\midrule
		\texttt{REF: a person is sitting on a bed}\newline
		\texttt{~~z: a person is sitting on a bed}\newline
		\texttt{~~w: a person is sitting on a chair} &
		\texttt{REF: a person runs into a room}\newline
		\texttt{~~z: a person runs into a room}\newline
		\texttt{~~w: a person opens a door} &
		\texttt{REF: person drinking a glass of water}\newline
		\texttt{~~z: person drinking a glass of water}\newline
		\texttt{~~w: a person is drinking a glass of water}
		\\[3pt]
		\texttt{REF: a person is holding a bag}\newline
		\texttt{~~z: a person is holding a bag}\newline
		\texttt{~~w: a person runs into a room} &
		\texttt{REF: a person drinks a glass of water}\newline
		\texttt{~~z: person drinks a glass of water}\newline
		\texttt{~~w: person takes a drink from a glass} &
		\texttt{REF: person drinking a cup of coffee}\newline
		\texttt{~~z: person drinking a cup of coffee}\newline
		\texttt{~~w: person takes a phone}
		\\[3pt]
		\texttt{REF: person pours a cup of coffee}\newline
		\texttt{~~z: person pours a cup of coffee}\newline
		\texttt{~~w: person pours it on a table} &
		\texttt{REF: a person sits on a chair}\newline
		\texttt{~~z: person sits on a chair}\newline
		\texttt{~~w: person sits on chair} &
		\texttt{REF: a person awakens in bed}\newline
		\texttt{~~z: a person awakens in bed}\newline
		\texttt{~~w: a person is awakening in a bed}
		\\
		\midrule
		\multicolumn{3}{@{}l@{}}{\textbf{Clips where wait-$k$ scores highest}}\\
		\midrule
		\texttt{REF: person puts the bag on the table}\newline
		\texttt{~~w: person puts the bag on the table}\newline
		\texttt{~~z: person takes a bag of water} &
		\texttt{REF: person takes a drink from a glass}\newline
		\texttt{~~w: person takes a drink from a glass}\newline
		\texttt{~~z: person drinks a glass of water} &
		\texttt{REF: person drinking a cup of coffee}\newline
		\texttt{~~w: person drinking a cup of coffee}\newline
		\texttt{~~z: person drinking a glass of water}
		\\[3pt]
		\texttt{REF: a person is holding a broom}\newline
		\texttt{~~w: a person is holding a broom}\newline
		\texttt{~~z: person they open the door} &
		\texttt{REF: person drinking a glass of water}\newline
		\texttt{~~w: person drinking a glass of water}\newline
		\texttt{~~z: person drinks a glass of water} &
		\texttt{REF: person pours a cup of coffee}\newline
		\texttt{~~w: person pours a cup of coffee}\newline
		\texttt{~~z: person pours a cup of water}
		\\[3pt]
		\texttt{REF: the person pours a cup of coffee}\newline
		\texttt{~~w: person pours a cup of coffee}\newline
		\texttt{~~z: person pours a glass of water} &
		\texttt{REF: a person opens a door}\newline
		\texttt{~~w: a person opens a door}\newline
		\texttt{~~z: a person opens the door} &
		\texttt{REF: person turns on a light}\newline
		\texttt{~~w: person turns on a light}\newline
		\texttt{~~z: person turns off the light}
		\\
		\bottomrule
	\end{tabular}
}
	\caption{Streaming decodes on identical Charades-STA clips. \textsc{z}: ZENDAYA $\gamma{=}0.2$; \textsc{w}: wait-40.}
	\label{tab:qual-supp}
\end{table*}

\section{Seeded Replication of the Fixed-Window Sweep}
\label{app:seeds}
The fixed-window sweep was repeated under three seeds. Seven configurations per corpus, the $\gamma \to 0$ baseline and
six values of $\gamma$, across three corpora resulted in 63 training runs in all. A seed fixes weight
initialisation, dropout and batch order. Splits and vocabulary were already fixed upstream by
the preprocessing stage. Figure~\ref{fig:supp-seeded} and Table~\ref{tab:supp-seeded} report
the outcome.

The seeds pull the offline picture towards our geometry rather than away from it. We argue that
a larger source-to-target ratio should move the useful operating point to larger $\gamma$, and
our single runs did not show that. They put ActivityNet, the corpus of highest $\rho$, at
$\gamma = 0.10$, the smallest exponent we sweep. Under three seeds its supported settings sit at
$\gamma = 0.7$, against $\gamma = 0.3$ on Charades-STA, which is the order that $\rho = 3.38$
and $\rho = 2.75$ ask for. The negative prediction survives as well. On LibriHeavy, where
$\rho < 1$ leaves no surplus source to withhold, no $\gamma$ separates from the baseline at all,
and the collapse at $\gamma = 1$ is plain on every seed.

Where the seeds disagree with us is the $\gamma$ we chose for ActivityNet. That selection came
from a single unseeded run, which is the scope those experiments declare, and it landed at
$\gamma = 0.10$. Across three seeds the setting sits a little under the baseline on both
metrics, $0.0145$ against $0.0165$ and $0.1123$ against $0.1150$, with both gaps inside the
noise. Three of our statements lean on that run. The best-against-best entry for ActivityNet in
Table~\ref{tab:fixed} does not hold at the $\gamma$ it names. The claim that every swept setting exceeds the
baseline fails there at $\gamma = 0.1$ and $\gamma = 0.3$. And the margins we quote, $+19.6\%$
for Charades-STA against $+46.9\%$ for ActivityNet, become $+34\%$ and $+20\%$, which reverses
their order. We would replace that magnitude claim with the location claim above, which is the
sharper statement and the one the seeds support. At the corrected setting ActivityNet returns
$1.97$ BLEU-4 and $12.25$ METEOR. On Charades-STA and LibriHeavy the $\gamma$ we selected
stands.

Two things temper the picture. The useful settings are not contiguous: Charades-STA gains on
both metrics at $\gamma \in \{0.1, 0.2, 0.3\}$ and again at $0.7$, but slips just below the
baseline on METEOR at $0.4$. That is the multi-modal response over $\gamma$ our future work
anticipates, and a reason to read $\gamma$ as a region to search rather than a value to tune.
And only the fixed-window setting was repeated. The streaming results remain single runs, and
the optimisation noise we measure here plausibly reaches their $\gamma$ and $k$ selections too.
We have not measured that.

Table~\ref{tab:supp-ci} and this section are not in conflict. The bootstrap resamples test clips
within one training run and so measures sampling noise, while the seeds vary the run and so
measure optimisation noise. On ActivityNet the second turns out to be the larger of the two,
which no single-run interval could have revealed. Where they disagree, we would trust the seeded
means.

\begin{figure*}[t]
	\centering
	\includegraphics[width=\linewidth]{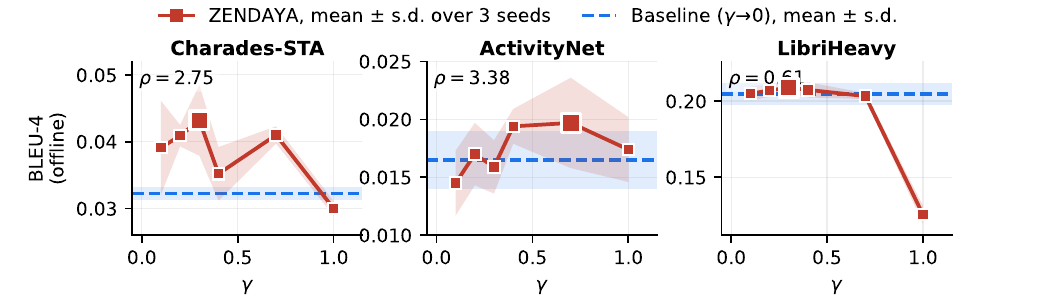}
	\caption{Offline BLEU-4 against $\gamma$, mean and standard deviation over three seeds. The
		dashed line and shaded band are the baseline mean and its own seed spread. The enlarged marker
		is the best $\gamma$ by mean on each corpus; on LibriHeavy that margin lies inside the seed
		spread, as Table~\ref{tab:supp-seeded} shows.}
	\label{fig:supp-seeded}
\end{figure*}

\begin{table*}[t]
	\centering
	\scriptsize
	\setlength{\tabcolsep}{4pt}
	\begin{tabular}{@{}l cc cc cc@{}}
		\toprule
		& \multicolumn{2}{c}{Charades-STA} & \multicolumn{2}{c}{ActivityNet} & \multicolumn{2}{c}{LibriHeavy} \\
		\cmidrule(lr){2-3}\cmidrule(lr){4-5}\cmidrule(lr){6-7}
		$\gamma$ & BLEU-4 & METEOR & BLEU-4 & METEOR & BLEU-4 & METEOR \\
		\midrule
		$\gamma\!\to\!0$ & 0.0322\,$\pm$\,0.0010 & 0.2018\,$\pm$\,0.0106 & 0.0165\,$\pm$\,0.0025 & 0.1150\,$\pm$\,0.0046 & 0.2047\,$\pm$\,0.0075 & 0.5496\,$\pm$\,0.0113 \\
		\midrule
		0.1 & 0.0391\,$\pm$\,0.0070 & 0.2160\,$\pm$\,0.0122 & 0.0145\,$\pm$\,0.0028 & 0.1123\,$\pm$\,0.0053 & 0.2049\,$\pm$\,0.0045 & 0.5525\,$\pm$\,0.0054 \\
		0.2 & \textbf{0.0409\,$\pm$\,0.0016} & 0.2168\,$\pm$\,0.0050 & 0.0170\,$\pm$\,0.0027 & 0.1156\,$\pm$\,0.0051 & 0.2070\,$\pm$\,0.0042 & 0.5533\,$\pm$\,0.0095 \\
		0.3 & \textbf{0.0432\,$\pm$\,0.0053} & 0.2191\,$\pm$\,0.0107 & 0.0159\,$\pm$\,0.0023 & 0.1155\,$\pm$\,0.0065 & 0.2088\,$\pm$\,0.0033 & 0.5522\,$\pm$\,0.0049 \\
		0.4 & 0.0352\,$\pm$\,0.0040 & 0.2000\,$\pm$\,0.0069 & 0.0194\,$\pm$\,0.0015 & \textbf{0.1210\,$\pm$\,0.0016} & 0.2074\,$\pm$\,0.0050 & 0.5476\,$\pm$\,0.0028 \\
		0.7 & \textbf{0.0410\,$\pm$\,0.0013} & 0.2098\,$\pm$\,0.0077 & \textbf{0.0197\,$\pm$\,0.0039} & \textbf{0.1225\,$\pm$\,0.0053} & 0.2033\,$\pm$\,0.0020 & 0.5422\,$\pm$\,0.0017 \\
		1.0 & 0.0300\,$\pm$\,0.0011 & 0.1911\,$\pm$\,0.0032 & 0.0174\,$\pm$\,0.0028 & 0.1193\,$\pm$\,0.0023 & 0.1255\,$\pm$\,0.0060 & 0.4117\,$\pm$\,0.0080 \\
		\bottomrule
	\end{tabular}
	\caption{Fixed-window sweep over three seeds, mean $\pm$ standard deviation. Bold marks a
		configuration that beats its baseline on every seed by a mean paired difference of at
		least three times its standard error. With three seeds these are effect sizes rather
		than significance tests, and we report them as such.}
	\label{tab:supp-seeded}
\end{table*}

\end{document}